\documentclass{article}

\usepackage{PRIMEarxiv}

\usepackage[utf8]{inputenc} 
\usepackage[T1]{fontenc}    
\usepackage{hyperref}       
\usepackage{url}            
\usepackage{booktabs}       
\usepackage{amsfonts}       
\usepackage{nicefrac}       
\usepackage{microtype}      
\usepackage{lipsum}
\usepackage{fancyhdr}       
\usepackage{graphicx}       
\graphicspath{{media/}}     

\usepackage{times}
\usepackage{latexsym}
\usepackage{amsthm}

\newtheorem{proposition}{Proposition}

\usepackage[T1]{fontenc}

\usepackage[utf8]{inputenc}

\usepackage{microtype}

\usepackage{inconsolata}
\usepackage{microtype}
\usepackage{graphicx}

\usepackage{hyperref}

\usepackage[textsize=tiny]{todonotes}
\usepackage{amssymb}

\usepackage{natbib} 
\usepackage{adjustbox}
\usepackage{graphicx}
\usepackage{caption}
\usepackage{subcaption}

\usepackage{amsmath}
\usepackage{bm}
\usepackage{multirow}
\usepackage{xcolor}
\usepackage[utf8]{inputenc}
\usepackage{mathtools}
\usepackage{bbm}
\usepackage{physics}
\usepackage{float}
\usepackage{booktabs}
 \usepackage{array, makecell}
 \usepackage{algorithm}
 \usepackage{algorithmic}
\usepackage{tabularx}
\usepackage{amsthm}

\usepackage{hyperref}
\title{Learning response-aware patient dynamics for respiratory support}

\author{
  Xiaolei Lu\thanks{Corresponding author. Email: luxiaolei0412@gmail.com},
  Shamim Nemati
}

\begin{document}

\maketitle

\begin{abstract}

Respiratory support can shape the short-term physiological trajectory of
critically ill patients, but patients receiving the same intervention may
follow different physiological trajectories. Clinical patient dynamics models typically predict future states from recent physiology and recorded
interventions, while physiological change is mainly represented through the predicted future state. We propose a response-aware patient dynamics model that explicitly represents physiological change during autoregressive state updating. The model decomposes predicted physiological change into state-dependent baseline dynamics and respiratory-support-associated deviations, with room air providing a reference for the decomposition. We provide a formal analysis of this reference-anchored formulation. A response pathway encodes the predicted physiological change and uses it to update the latent patient state across the forecast horizon. Across ICU cohorts from two independent institutions, the proposed model achieves comparable overall
trajectory prediction to patient dynamics baselines, with more consistent
improvements when physiological states are changing.

\end{abstract}

\section{Introduction}

Acute respiratory failure is a common reason for hospitalization and admission to the intensive care unit \citep{villgran2022acute}. Patients with hypoxemia are often first treated with conventional oxygen therapy. If respiratory failure worsens, support may be escalated to high-flow nasal cannula (HFNC), non-invasive ventilation (NIV), or invasive mechanical ventilation (IMV) \citep{azevedo2015high,coudroy2022high}. Physiological response to respiratory support varies across patients. Some patients improve after treatment, while others remain unstable or continue to deteriorate and may require further escalation. Changes in physiological measurements over short time intervals can provide useful information about the patient's evolving condition during respiratory support \citep{perez2025monitoring}. Modeling these short-term changes may therefore help characterize how physiology evolves under respiratory support. Figure~\ref{fp} illustrates this sequential setting and the role of physiological response in describing the transition from one patient state to the next.

Recent patient dynamics and clinical world models predict how patient states evolve over time under recorded interventions \citep{xu2026meddreamer,wu2026agentifying}. A common formulation represents the current patient state from recent clinical measurements and models the next state conditioned on the current intervention,
$p(s_{t+1}\mid s_t,a_t)$. The intervention may be represented by a treatment category, dose, or other recorded treatment variables \citep{yang2025medical,qazi2025beyond}. In these models, physiological progression and intervention-associated change are typically represented together within the same state-transition function.

\begin{figure}
    \centering
    \includegraphics[width=0.8\textwidth, height=0.3\textwidth]{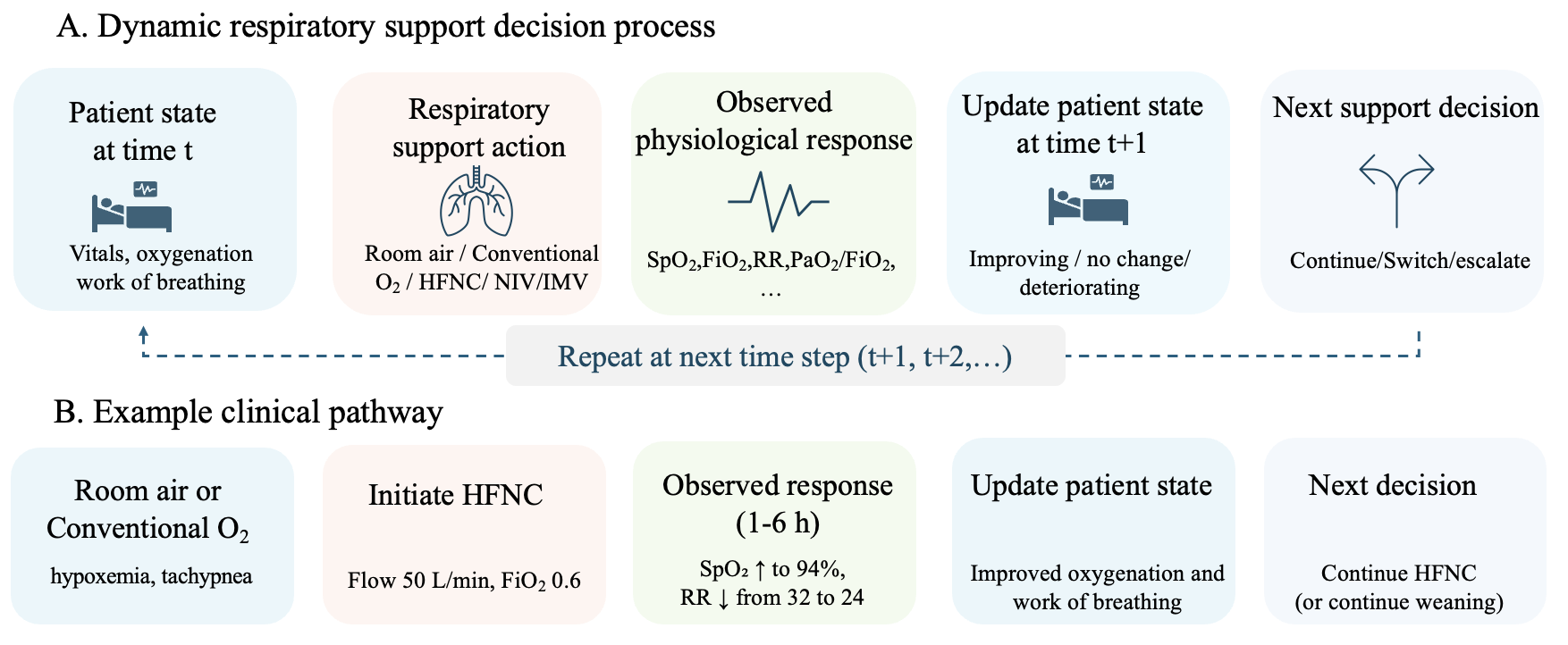}  
    \caption{Clinical motivation for response-aware respiratory dynamics.
(A) Respiratory support is delivered sequentially as patient physiology evolves. The respiratory support action space includes room air as a no-active-support category, since patients may experience periods without active respiratory support between support modalities in clinical practice. Following a respiratory support action, the observed physiological response provides new information that can update the patient state and inform the next support decision.
(B) Example clinical pathway showing how short-term response after HFNC initiation may help determine whether to continue the current support or consider escalation.}
    \label{fp}
\end{figure}

However, this formulation has two limitations for modeling respiratory support. First, physiological change after treatment may reflect both the patient's underlying progression and variation associated with the current respiratory support, but these components are usually modeled together. Second, action-conditioned dynamics models typically represent the next physiological level, without explicitly representing the change from the previous state during state updating. This change may provide additional information about how the patient's condition is evolving. Clinical studies of HFNC and NIV support this motivation. Patients who later succeed or fail treatment may have similar baseline measurements but show different physiological changes during the first hours of respiratory support \citep{mongodi2026early}, and the predictive value of response-based indices can increase after treatment initiation \citep{roca2019index}.

In this study, we extend action-conditioned patient dynamics modeling to
short-term respiratory-support forecasting with a response-aware formulation. The model separates state-dependent baseline dynamics from
respiratory-support-associated deviations and explicitly represents predicted physiological change during autoregressive state updating.

Our main contributions are as follows:
\begin{itemize}

\item We formulate short-term respiratory-support forecasting as an
action-conditioned patient dynamics problem in which physiological change is explicitly represented during trajectory prediction.

\item We develop a response-aware dynamics model that decomposes predicted
physiological change into state-dependent baseline dynamics and
respiratory-support-associated deviations, and uses the predicted change to update the latent patient state. We further analyze the decomposition using room air as a reference condition.

\item We evaluate the model on real-world ICU data from two institutions,
including temporal and external validation, and analyze performance across
forecast horizons, respiratory-support states, and clinically relevant
physiological variables.

\end{itemize}

\section{Related Work}

\subsection{Clinical Patient Dynamics and World Models}

Clinical trajectory models learn how patient states evolve over time from
longitudinal EHR data. Clinical world models further condition these dynamics on treatment actions to model future patient trajectories
\citep{xu2026meddreamer,wu2026agentifying,yang2025medical,qazi2025beyond}.
MedDreamer models patient dynamics in a latent state space using posterior
state inference from observed measurements and an action-conditioned prior for rollout \citep{xu2026meddreamer}. SepsisAgent uses an action-conditioned world model to predict short-term physiological states under different treatment actions \citep{wu2026agentifying}. These models differ in their state representations and transition architectures, but generally model physiological progression and intervention-associated change within the same transition process. Our work instead decomposes predicted physiological change into state-dependent baseline dynamics and respiratory-support-associated deviations, and explicitly represents predicted physiological change during state updating.

\subsection{Physiological Response to Respiratory Support}

Physiological response during respiratory support can provide information about a patient's evolving respiratory status. Following HFNC or NIV initiation, changes in oxygenation, respiratory rate, and related measurements have been used to assess subsequent treatment success or failure. For example, the predictive value of the ROX index increases during the hours following HFNC initiation \citep{roca2019index}, and other studies have reported different early physiological response patterns between patients who subsequently succeed or fail respiratory support \citep{mongodi2026early,perez2025monitoring}. These findings suggest that physiological change itself carries information about the patient's evolving state, beyond the physiological level at a single time point, which provides clinical motivation for explicitly representing physiological change in a patient dynamics model. Our model therefore encodes predicted physiological change and uses it during autoregressive state updating.

\section{Methodology}

\subsection{Problem formulation}

We denote the patient's recent physiological history over a look-back window of
length $L$ as $\bm{X}_{t-L+1:t}$, where
$\bm{x}_t \in \mathbb{R}^{p}$ contains the physiological measurements at time
$t$. At each time point, the respiratory intervention is represented by
$\bm{a}_t \in \mathbb{R}^{q}$, including the respiratory-support modality and
available modality-specific settings, such as oxygen flow and
$\mathrm{FiO}_2$ for oxygen support and airway pressures for non-invasive
ventilation. Static patient characteristics are denoted by $\bm{c}$.

Given the observed patient history and the recorded respiratory-support
sequence, our goal is to predict the physiological trajectory over the next
$H$ hours:
\begin{equation}
\hat{\bm{X}}_{t+1:t+H}=
f_\theta
\left(
\bm{X}_{t-L+1:t},
\bm{A}_{t-L+1:t+H-1},
\bm{c}
\right),
\end{equation}
where the model uses the observed respiratory-support history to initialize the patient state and is conditioned on the recorded respiratory-support action at each prediction step. Measurement-availability indicators and time-since-last-measurement variables are also included in the initial state representation.

We model the future trajectory autoregressively. At each prediction step, the predicted physiological change is decomposed into a state-dependent baseline component and a respiratory-support-associated deviation. The predicted change is then used both to obtain the next physiological state and to update the latent patient state before the following prediction step.

\subsection{Response-aware physiological dynamics}

We represent the patient using a latent state that is updated over the
prediction horizon. The model has four main components:
(1) initialization of the latent patient state from recent history,
(2) state-dependent encoding of respiratory support,
(3) decomposition of predicted physiological change into baseline dynamics and respiratory-support-associated deviations, and
(4) response-aware latent-state updating using the predicted physiological
change. An overview of the model architecture is provided in
Appendix~\ref{app:framework}.

\paragraph{Patient state initialization.}
A sequence encoder summarizes the recent patient trajectory using physiological measurements, time-since-last-measurement information, measurement-availability indicators, and respiratory-support history. Static patient characteristics and comorbidities are encoded separately.

The resulting dynamic and static representations are combined to initialize the latent patient state:
\begin{equation}
\bm{s}_t=
E_{\mathrm{state}}
\left(
\bm{X}_{t-L+1:t},
\bm{A}_{t-L+1:t},
\bm{T}_{t-L+1:t},
\bm{M}_{t-L+1:t},
\bm{c}
\right),
\label{eq:initial_state}
\end{equation}
where $\bm{T}$ denotes time-since-last-measurement information and
$\bm{M}$ denotes measurement-availability indicators. The initialized state $\bm{s}_t$ represents the patient's recent physiological trajectory,
respiratory-support history, and static characteristics at the start of
prediction.

\paragraph{State-dependent respiratory-support representation.}

Given the current latent patient state and respiratory intervention, we construct a state-dependent respiratory-support representation:
\begin{equation}
\bm{z}^{a}_t=
E_{\mathrm{action}}
\left(
\bm{s}_t,
\bm{a}_t
\right).
\label{eq:action}
\end{equation}

Conditioning the respiratory-support representation on the current latent state allows the same intervention to have different representations for different patient states.

\paragraph{Decomposition of physiological change.}

The predicted physiological change is decomposed into state-dependent baseline dynamics and a respiratory-support-associated deviation:
\begin{equation}
\Delta \hat{\bm{x}}_t=
F_{\mathrm{base}}(\bm{s}_t)
+
\bm{g}_t
\odot
F_{\mathrm{treat}}(\bm{z}^{a}_t),
\label{eq:dynamics}
\end{equation}
where
\begin{equation}
\bm{g}_t=
\sigma
\left(
G_{\mathrm{gate}}
\left(
\bm{s}_t,
\bm{a}_t
\right)
\right)
\label{eq:gate}
\end{equation}
is a feature-wise gate.

The baseline term $F_{\mathrm{base}}(\bm{s}_t)$ represents the component of predicted physiological change determined by the current latent patient state. The effective respiratory-support-associated deviation is $\bm{g}_t \odot F_{\mathrm{treat}}(\bm{z}^{a}_t)$ and represents additional variation modeled as a function of the recorded respiratory-support condition. Because other concurrent treatments are not explicitly separated in this formulation, the two components should be interpreted as a model-based decomposition rather than as isolated treatment effects.

We use room air as a reference condition for the decomposition. During
training, a soft regularization term encourages the
respiratory-support-associated deviation to be small for room-air steps.
Under the corresponding hard reference constraint, the additive decomposition is uniquely defined relative to the room-air reference; the formal result is provided in Appendix~\ref{app:decomposition_theory}. We use this decomposition to organize the predicted dynamics rather than to attribute physiological change to a causal treatment effect.

The next physiological state is obtained using a residual update:
\begin{equation}
\hat{\bm{x}}_{t+1}=
\tilde{\bm{x}}_t
+
\Delta\hat{\bm{x}}_t,
\label{eq:state_prediction}
\end{equation}
where $\tilde{\bm{x}}_t=\bm{x}_t$ at the first prediction step and
$\tilde{\bm{x}}_t=\hat{\bm{x}}_t$ thereafter.

\paragraph{Physiological response representation.}

After predicting the next physiological state, we define the short-term
physiological response as the change between consecutive states:
\begin{equation}
\hat{\bm{r}}_t=
\hat{\bm{x}}_{t+1}-
\tilde{\bm{x}}_t.
\label{eq:pred_response}
\end{equation}

The response is then encoded as
\begin{equation}
\bm{z}^{r}_t=
E_{\mathrm{response}}
\left(
\hat{\bm{r}}_t
\right),
\label{eq:response_encoder}
\end{equation}
where $E_{\mathrm{response}}$ is a learnable encoder used for latent-state
updating. During free-running prediction,
$\hat{\bm{r}}_t=\Delta\hat{\bm{x}}_t$. The response pathway therefore does not
introduce additional information, but provides a separate representation of the predicted change for the state-update module.

\paragraph{Response-aware latent-state updating.}

The latent patient state is updated using the predicted physiological state,
the state-dependent respiratory-support representation, and the encoded
physiological response:
\begin{equation}
\bm{s}_{t+1}=
U_{\mathrm{state}}
\left(
\bm{s}_t,
\hat{\bm{x}}_{t+1},
\bm{z}^{a}_t,
\bm{z}^{r}_t
\right),
\label{eq:update}
\end{equation}
where $U_{\mathrm{state}}$ denotes the state-update module.

The update  uses both the predicted physiological level and an
explicit representation of its change from the previous state. The resulting latent state is used to compute the baseline dynamics and
respiratory-support representation for the next prediction step:
$F_{\mathrm{base}}(\bm{s}_{t+1})$ and
$E_{\mathrm{action}}(\bm{s}_{t+1},\bm{a}_{t+1})$, respectively. Appendix~\ref{app:response_theory} provides additional analysis of the response pathway and its role in latent-state updating.

\subsection{Training}

\paragraph{Response-guided autoregressive training.}

The model is trained using multi-step autoregressive trajectory prediction. After initialization from the observed history, future physiological states are generated recursively:
\begin{equation}
\hat{\bm{x}}_{t+h}=
\hat{\bm{x}}_{t+h-1}
+
\Delta\hat{\bm{x}}_{t+h-1},
\qquad h=2,\ldots,H.
\label{eq:autoregressive}
\end{equation}

At each prediction step, the model-generated physiological state is carried forward to the next step; future ground-truth physiological states are not used as inputs to the trajectory rollout.

To stabilize training of the response pathway, we use a short response-guided schedule during the initial training epochs. For patient $i$ and prediction step $h$, the observed physiological response is
\begin{equation}
\bm{r}^{\mathrm{obs}}_{i,h}=\bm{x}_{i,t+h}-
\bm{x}_{i,t+h-1},
\label{eq:observed_response}
\end{equation}
and the model-generated response is
\begin{equation}
\bm{r}^{\mathrm{pred}}_{i,h}=\hat{\bm{x}}_{i,t+h}-
\tilde{\bm{x}}_{i,t+h-1},
\label{eq:model_response}
\end{equation}
where $\tilde{\bm{x}}_{i,t+h-1}$ denotes the state used as the reference for computing the response: it is the final observed physiological state at the first prediction step and the previous model-generated state at all subsequent steps.

During the response-guided period, the response supplied to the latent-state update is sampled independently for each patient and prediction step:
\begin{equation}
\bm{r}^{\mathrm{use}}_{i,h}=
b_{i,h}\bm{r}^{\mathrm{obs}}_{i,h}
+
\left(1-b_{i,h}\right)\bm{r}^{\mathrm{pred}}_{i,h},
\label{eq:response_teacher}
\end{equation}
where
\begin{equation}
b_{i,h}
\sim
\mathrm{Bernoulli}(\rho_e),
\end{equation}
and $\rho_e$ denotes the probability of using the observed response at epoch
$e$.

The probability $\rho_e$ is reduced to zero over the initial training period.
Observed responses are used only as inputs to the response pathway for
latent-state updating; the predicted physiological state is always carried
forward in the autoregressive rollout. After the response-guided period,
training uses only model-generated responses, matching the response input used
during validation and testing.

\paragraph{Respiratory-support regularization.}

Room air is used as the no-active-support reference condition. For prediction steps corresponding to room air, we penalize the magnitude of the effective respiratory-support-associated deviation:
\begin{equation}
\mathcal{L}_{\mathrm{RA}}=
\frac{1}{|\mathcal{I}_{\mathrm{RA}}|}
\sum_{(i,h)\in\mathcal{I}_{\mathrm{RA}}}
\left|
\bm{g}_{i,h}
\odot
F_{\mathrm{treat}}(\bm{z}^{a}_{i,h})
\right|_2^2,
\label{eq:ra_loss}
\end{equation}
where $\mathcal{I}_{\mathrm{RA}}$ denotes prediction steps corresponding to room air.

This regularization applies only to the respiratory-support-associated
deviation. It does not assume that physiology remains unchanged during
room-air periods; physiological change can still be represented by
$F_{\mathrm{base}}(\bm{s}_{i,h})$. The penalty therefore provides a soft
room-air reference for the decomposition. Under the corresponding hard
reference constraint, the additive decomposition is uniquely defined relative
to room air; the formal result is provided in
Appendix~\ref{app:decomposition_theory}.

\paragraph{Training objective.}

Let $m_{i,h,j}$ indicate whether physiological variable $j$ is observed for
patient $i$ at prediction horizon $h$. The trajectory-prediction loss is
\begin{equation}
\mathcal{L}_{\mathrm{pred}}
=
\frac{
\sum_{i=1}^{N}
\sum_{h=1}^{H}
\sum_{j=1}^{p}
m_{i,h,j}
\left(
x_{i,t+h,j}
-
\hat{x}_{i,t+h,j}
\right)^2
}{
\sum_{i=1}^{N}
\sum_{h=1}^{H}
\sum_{j=1}^{p}
m_{i,h,j}
},
\label{eq:prediction_loss}
\end{equation}
where the observation mask restricts the loss to available target measurements.

The complete training objective is
\begin{equation}
\mathcal{L}
=
\mathcal{L}_{\mathrm{pred}}
+
\lambda_{\mathrm{RA}}
\mathcal{L}_{\mathrm{RA}},
\label{eq:total_loss}
\end{equation}
where $\lambda_{\mathrm{RA}}$ controls the strength of the room-air
regularization.

\paragraph{Free-running prediction.}
During validation, testing, and external evaluation, no future physiological measurements or observed responses are provided. Predicted states are recursively
carried forward, and the model-generated response satisfies
$\hat{\bm r}_t=\Delta\hat{\bm x}_t$. Thus, after initialization, prediction depends only on model-generated states and the specified respiratory-support sequence.

\section{Experiments}

\subsection{Experimental setting}
\noindent{\textbf{Datasets}}.
We conducted a retrospective study using de-identified EHR data from adult ICU patients ($\geq$18 years). Data from January 1, 2016, to December 31, 2023, were divided into a development cohort (\textit{Dev}) for model training and an internal validation cohort (\textit{Val}) for model selection. We additionally evaluated the model on a later-period cohort from the same institution (\textit{Temporal}; January 1, 2024--June 30, 2024) and an independent cohort from a separate institution (\textit{External}; January 1, 2023--August 31, 2024). All analyses were conducted under institutional review board approval with waiver of informed consent. Cohort characteristics are summarized in Appendix Table~\ref{tab:cohort_characteristics}, with cohort construction and
preprocessing details provided in Appendix~\ref{data}.

\noindent{\textbf{Implementation Details}}.
We optimize the model using AdamW \citep{loshchilov2017decoupled}. Model
selection is based on the lowest validation RMSE, with early stopping using a patience of 10 epochs. We use a four-epoch response-guided warm-up, with $\rho_e$ linearly decreased from 1 to 0 across the four epochs. After the warm-up period, $\rho_e=0$ and training uses only model-generated responses. Hyperparameters are tuned on the internal validation cohort using Optuna \citep{akiba2019optuna}. We search over learning rate, latent-state dimension, action-representation dimension, network depth, dropout, weight decay, and $\lambda_{\mathrm{RA}}$. All learning-based baselines use the same training and validation splits,
preprocessing pipeline, physiological variables, respiratory-support inputs, prediction horizon, and model-selection objective. Hyperparameters are tuned independently for each baseline using the same validation objective and a comparable search space and number of trials.

\noindent{\textbf{Evaluation}}.
We evaluate multi-step physiological trajectory prediction in a fully
free-running setting. After initialization from a 6-hour observed history, the model recursively predicts the following 6 hours of physiology without access to future ground-truth physiological states. We use a 6-hour prediction horizon to focus on short-term physiological changes during respiratory support, which can be clinically informative for assessing evolving respiratory status and deterioration \citep{roca2019index}. Performance is evaluated only at observed target measurements.

Our primary metric is root mean squared error (RMSE), reported over the full 1--6 h prediction horizon and separately at each forecast hour. Overall RMSE is computed by pooling all observed feature-level targets across the six forecast hours and taking the square root of the mean squared error. We also report mean absolute error (MAE). Because persistence can perform strongly when physiology changes little over
short intervals, we separately evaluate targets with non-trivial physiological changes. For each feature and forecast horizon, a target is included when $|x_{t+h}-x_t|>0.25$ SD and the ground-truth value is observed, where SD is the training-set standard deviation of that variable. This criterion removes near-stable targets while accounting for differences in scale across physiological variables. We report MAE and RMSE on these targets, together with the relative RMSE improvement over persistence. We further stratify performance by respiratory-support modality, including room air, conventional oxygen therapy, HFNC, and NIV.

\subsection{Ablation and comparison with baselines}

\paragraph{Ablation study.}
Table~\ref{tab:ablation} evaluates the effects of respiratory-support-associated dynamics and the response-aware state update. On the validation cohort, adding respiratory-support-associated dynamics reduces MSE from 0.270 to 0.268 and MC-MAE from 0.736 to 0.726, while the full model further reduces MSE to 0.266. On the temporal and external cohorts, respiratory-support-associated dynamics alone do not consistently improve performance, whereas the response-aware update recovers or improves several metrics, including MC-MAE. These results suggest that respiratory-support conditioning alone is insufficient, and that feeding predicted physiological change back into the latent state improves the stability of the learned dynamics across cohorts.

\begin{table}[t]
\centering
\caption{Ablation study of the proposed patient dynamics model across three evaluation cohorts.}
\label{tab:ablation}

\small
\setlength{\tabcolsep}{3pt}
\renewcommand{\arraystretch}{1.05}


\begin{tabular}{@{}lccccc@{}}
\hline

\multicolumn{6}{c}{\textbf{Val}} \\
\hline
\textbf{Model}
& \textbf{MSE $\downarrow$}
& \textbf{1-h RMSE $\downarrow$}
& \textbf{3-h RMSE $\downarrow$}
& \textbf{6-h RMSE $\downarrow$}
& \textbf{MC-MAE $\downarrow$} \\
\hline

Base dynamics
& 0.270
& 0.369
& 0.504
& 0.621
& 0.736 \\

+ Treatment-associated dynamics
& 0.268
& \textbf{0.361}
& 0.506
& \textbf{0.619}
& \textbf{0.726} \\

+ Response-aware state update (Full)
& \textbf{0.266}
& \textbf{0.361}
& 0.505
& \textbf{0.619}
& \textbf{0.726} \\

\hline

\multicolumn{6}{c}{\textbf{Temporal}} \\
\hline
\textbf{Model}
& \textbf{MSE $\downarrow$}
& \textbf{1-h RMSE $\downarrow$}
& \textbf{3-h RMSE $\downarrow$}
& \textbf{6-h RMSE $\downarrow$}
& \textbf{MC-MAE $\downarrow$} \\
\hline

Base dynamics
& 0.304
& 0.398
& 0.545
& \textbf{0.646}
& 0.728 \\

+ Treatment-associated dynamics
& 0.304
& 0.396
& 0.538
& 0.653
& 0.772 \\

+ Response-aware state update (Full)
& 0.304
& \textbf{0.390}
& \textbf{0.537}
& 0.658
& \textbf{0.723} \\

\hline

\multicolumn{6}{c}{\textbf{External}} \\
\hline
\textbf{Model}
& \textbf{MSE $\downarrow$}
& \textbf{1-h RMSE $\downarrow$}
& \textbf{3-h RMSE $\downarrow$}
& \textbf{6-h RMSE $\downarrow$}
& \textbf{MC-MAE $\downarrow$} \\
\hline

Base dynamics
& 0.778
& 0.624
& \textbf{0.819}
& \textbf{1.061}
& 0.911 \\

+ Treatment-associated dynamics
& 0.814
& 0.625
& 0.835
& 1.103
& 0.904 \\

+ Response-aware state update (Full)
& \textbf{0.775}
& \textbf{0.620}
& \textbf{0.819}
& 1.062
& \textbf{0.901} \\

\hline
\end{tabular}


\vspace{2pt}

\begin{minipage}{0.98\linewidth}
\footnotesize
MC-MAE denotes mean absolute error on physiologically meaningful-change
targets, defined as $|x_{t+h}-x_t| > 0.25$ SD.
Lower values indicate better performance.
\end{minipage}

\end{table}

\noindent{\textbf{Baseline models}}.
We compare against representative physiological trajectory and clinical
world-model baselines: Persistence, Action-GRU, SepsisAgent-WM
\citep{wu2026agentifying}, and MedDreamer-WM
\citep{xu2026meddreamer}. Persistence carries the most recent observed
physiological state forward across the prediction horizon. Action-GRU is a
standard action-conditioned recurrent model for autoregressive physiological forecasting. SepsisAgent-WM and MedDreamer-WM are adaptations of recent clinical world models to our respiratory trajectory prediction task. Because both literature-derived dynamics models use GRU-based transition architectures, we use a GRU-based action-conditioned model as the standard recurrent baseline to provide a more direct comparison of the dynamics formulations. For the literature-derived models, we preserve their core transition architectures and modify only the task-specific input and output interfaces required for our setting. Additional baseline details are provided in Appendix~\ref{app:baselines}.

\paragraph{Comparison with baselines.}
Table~\ref{tab:clinical_world_models} shows that the learning-based models have similar overall RMSE, but their performance differs across forecast horizons. On the validation cohort, MedDreamer-WM has lower RMSE during the first 3 hours, whereas our model has lower RMSE from 4 to 6 hours. On the temporal cohort, MedDreamer-WM and our model have the same overall RMSE of 0.550; MedDreamer-WM performs better at the earliest horizons, while our model has lower RMSE at 3--5 h. On the external cohort, our model achieves the lowest
RMSE consistently from 2 to 5 h, although SepsisAgent-WM has the lowest overall RMSE.

These results suggest different strengths across models. MedDreamer-WM is stronger at the earliest forecast steps, whereas our model is more competitive over intermediate and later horizons, when predictions depend increasingly on
repeated state updates. Together with the ablation results, this pattern is consistent with the proposed formulation: decomposing physiological change and feeding the predicted change back into the latent state appears most useful
during multi-step autoregressive rollout, rather than at the first prediction step.

\paragraph{Meaningful physiological changes.}
We further evaluate targets with
$|x_{t+h}-x_t|>0.25$ SD (Table~\ref{tab:meaningful_change}).
On the validation cohort, our model achieves an RMSE of 1.016, corresponding to a 10.1\% improvement over persistence, compared with 9.2\% for MedDreamer-WM. On the temporal cohort, the corresponding improvements are 9.7\% and 9.6\%, respectively. On the external cohort, our model and Action-GRU both achieve an RMSE of 1.771, corresponding to a 4.2\% improvement over persistence. MedDreamer-WM has the lowest external-cohort MAE (0.897), while our model achieves 0.906.

The larger gains over persistence on these changing targets show that the differences between models become more apparent when physiology is evolving. Our model remains among the best-performing methods across all three cohorts, which is consistent with the intended role of explicitly representing
physiological change and feeding the predicted change back into the latent state during rollout.

\begin{table*}[t]
\centering
\caption{
Comparison with clinical world models for multi-step physiological trajectory prediction (values are RMSE; lower is better).
}
\label{tab:clinical_world_models}

\small
\setlength{\tabcolsep}{4.8pt}
\renewcommand{\arraystretch}{1.0}

\begin{tabular}{lccccccc}
\hline
\textbf{Model}
& \textbf{Overall}
& \textbf{1h}
& \textbf{2h}
& \textbf{3h}
& \textbf{4h}
& \textbf{5h}
& \textbf{6h} \\
\hline

\multicolumn{8}{l}{\textit{Val}} \\

Persistence
& 0.549
& 0.370
& 0.469
& 0.535
& 0.591
& 0.619
& 0.659 \\

Action-GRU

& 0.520
& 0.355
& 0.445
& 0.507
& 0.559
& 0.585
& 0.624 \\

SepsisAgent-WM
&0.519&0.355 & 0.445 & 0.508 & 0.559 & 0.585&0.624\\

MedDreamer-WM
& \textbf{0.516}
& \textbf{0.344}
& \textbf{0.435}
& \textbf{0.502}
& 0.556
& 0.584
& 0.624 \\

\textbf{Ours}
& \textbf{0.516}
& 0.361
& 0.440
& 0.505
& \textbf{0.550}
& \textbf{0.578}
& \textbf{0.619} \\

\multicolumn{8}{l}{\textit{Temporal}} \\

Persistence
& 0.587
& 0.400
& 0.515
& 0.583
& 0.627
& 0.658
& 0.694 \\

Action-GRU
& 0.552
& 0.383
& 0.492
& 0.546
& 0.585
& 0.617
& \textbf{0.652} \\

SepsisAgent-WM
& 0.552
& 0.383
& 0.495
& 0.547
& 0.585
& \textbf{0.615}
& 0.649 \\

MedDreamer-WM
& \textbf{0.550}
& \textbf{0.369}
& \textbf{0.485}
& 0.543
& 0.584
& 0.618
& 0.654 \\

\textbf{Ours}
& \textbf{0.550}
& 0.389
& 0.487
& \textbf{0.539}
& \textbf{0.579}
& \textbf{0.615}
& 0.653 \\

\multicolumn{8}{l}{\textit{External}} \\

Persistence
& 0.901
& 0.624
& 0.729
& 0.840
& 0.979
& 1.057
& 1.084 \\

Action-GRU
& 0.880
& \textbf{0.613}
& 0.707
& 0.822
& 0.958
& 1.028
& 1.061 \\

SepsisAgent-WM
& \textbf{0.878}
& \textbf{0.613}
& 0.708
& 0.821
& 0.959
& 1.028
& \textbf{1.055}\\

MedDreamer-WM
& 0.889
& 0.610
& 0.708
& 0.829
& 0.970
& 1.044
& 1.076 \\

\textbf{Ours}
& 0.880
& 0.620
& \textbf{0.706}
& \textbf{0.819}
& \textbf{0.957}
& \textbf{1.027}
& 1.062 \\

\hline

\end{tabular}

\end{table*}

\begin{table*}[t]
\centering
\caption{
Performance on physiologically meaningful changes
($|x_{t+h}-x_t|>0.25$ SD).
$\Delta$ denotes relative RMSE improvement over persistence.
}
\label{tab:meaningful_change}

\small
\setlength{\tabcolsep}{4pt}
\renewcommand{\arraystretch}{1.02}

\begin{tabular}{lccc|ccc|ccc}
\hline
& \multicolumn{3}{c|}{\textbf{Val}}
& \multicolumn{3}{c|}{\textbf{Temporal}}
& \multicolumn{3}{c}{\textbf{External}} \\
\textbf{Model}
& \textbf{MAE} & \textbf{RMSE} & \textbf{$\Delta$ (\%)}
& \textbf{MAE} & \textbf{RMSE} & \textbf{$\Delta$ (\%)}
& \textbf{MAE} & \textbf{RMSE} & \textbf{$\Delta$ (\%)} \\
\hline

Persistence
& 0.831 & 1.131 & --
& 0.828 & 1.224 & --
& 1.002 & 1.848 & -- \\

Action-GRU
& 0.738 & 1.036 & +8.4
& 0.733 & 1.118 & +8.7
& 0.911 & \textbf{1.771} & \textbf{+4.2} \\

SepsisAgent-WM
& 0.742 & 1.041 & +7.9
& 0.735 & 1.124 & +8.2
& 0.911 & 1.777 & +3.8 \\

MedDreamer-WM
& \textbf{0.726} & 1.027 & +9.2
& 0.718 & 1.106 & +9.6
& \textbf{0.897} & 1.783 & +3.5 \\

\textbf{Ours}
& \textbf{0.726} & \textbf{1.016} & \textbf{+10.1}
& \textbf{0.710} & \textbf{1.105} & \textbf{+9.7}
& 0.906 & \textbf{1.771} & \textbf{+4.2} \\

\hline
\end{tabular}

\vspace{2pt}
\begin{minipage}{0.98\linewidth}
\footnotesize
\end{minipage}

\end{table*}
\subsection{Clinical Stratified Analysis }

\paragraph{Performance across respiratory-support states.}
We stratify performance by respiratory-support state to assess whether the overall results are consistent across support conditions or driven by a single subgroup. Figure~\ref{fig:support_stratified} shows the relative RMSE improvement over persistence for room air, conventional oxygen, HFNC, and NIV. Our model remains competitive across all support states and achieves the lowest RMSE in several subgroups, including conventional oxygen and NIV on the validation cohort, room air and HFNC on the temporal cohort, and conventional oxygen and NIV on the external cohort. Differences between learning-based models are generally small, particularly for HFNC and the smaller NIV subgroups. Overall, these results suggest that the model's performance is not driven by a single respiratory-support category. Full subgroup results are provided in Appendix~\ref{app:support_stratification}.

\begin{figure*}[t]
    \centering
    \includegraphics[width=0.95\textwidth]{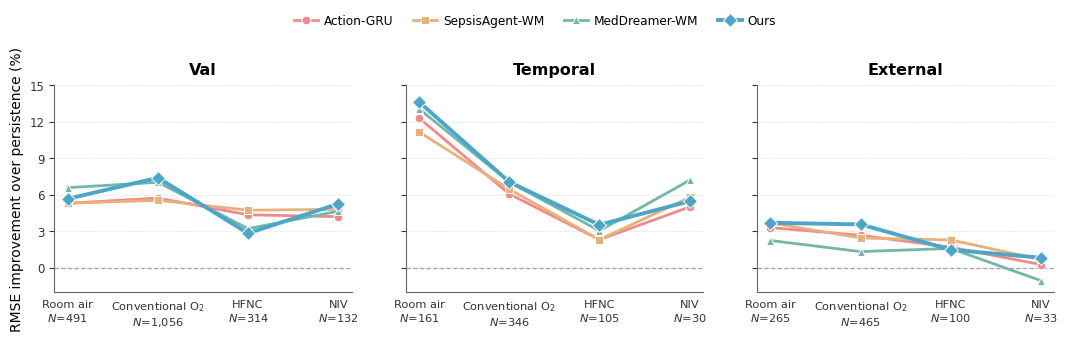}
    \caption{
    Relative RMSE improvement over persistence across respiratory-support
    modalities in the validation, temporal, and external cohorts.
    Positive values indicate lower RMSE than persistence.
    $N$ denotes the number of evaluation windows in each subgroup.
    }
    \label{fig:support_stratified}
\end{figure*}

\paragraph{Performance across physiological variables.}
We further analyze performance at the feature level to determine whether the improvements on changing physiological states are driven by a small number of variables or are distributed across different aspects of patient physiology. We focus on selected variables related to respiratory status and hemodynamics, including oxygen saturation (O$_2$Sat), respiratory rate (Resp), fraction of inspired oxygen (FiO$_2$), arterial oxygen and carbon dioxide partial pressures (PaO$_2$ and PaCO$_2$), pH, heart rate (HR), mean arterial pressure (MAP), and systolic and diastolic blood pressure (SBP and DBP). For each feature, evaluation is restricted to targets satisfying $|x_{t+h}-x_t|>0.25$ SD.

Figure~\ref{fig:support_meaningful} shows that the improvement over persistence is distributed across multiple variables rather than being driven by a single measurement. Our model achieves the lowest RMSE for 6 of 10 variables on the validation cohort, 5 of 10 on the temporal cohort, with a tie for HR, and 8 of 10 on the external cohort. The largest gains are observed for several respiratory variables, particularly FiO$_2$, while improvements are also seen for hemodynamic variables such as MAP and DBP. These results suggest that the benefit of the response-aware formulation is not limited to a single physiological variable or measurement type. Full absolute RMSE values and the number of evaluated targets for each feature are reported in Appendix
Table~\ref{tab:feature_meaningful}.

\begin{figure*}[t]
    \centering
    \includegraphics[width=0.95\textwidth]{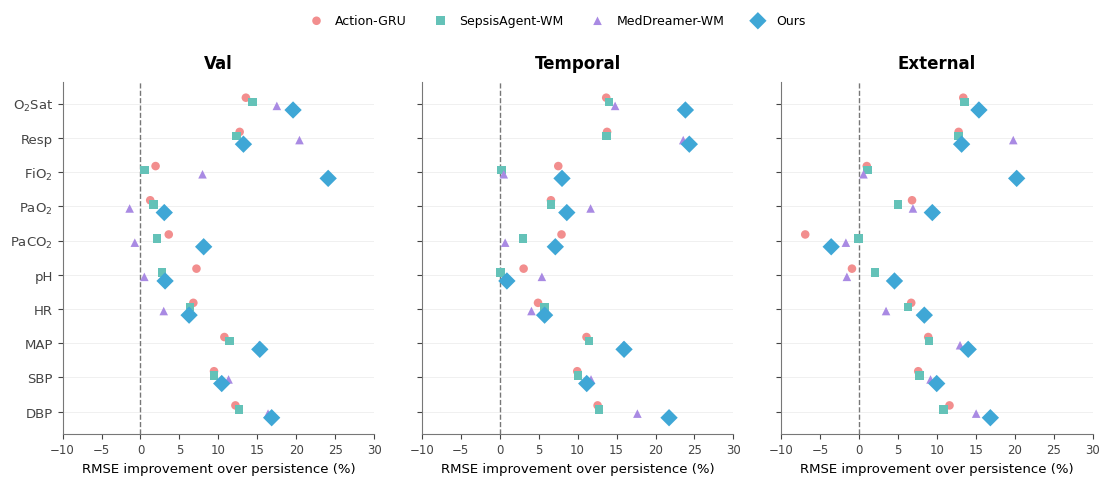}
    \caption{
    Relative RMSE improvement over persistence across respiratory-support
    modalities in the validation, temporal, and external cohorts.
    Positive values indicate lower RMSE than persistence.
    $N$ denotes the number of evaluation windows in each subgroup.
    }
    \label{fig:support_meaningful}
\end{figure*}

\section{Conclusion}

We introduced a response-aware patient dynamics model for short-term
physiological forecasting under respiratory support. The model decomposes
predicted physiological change into state-dependent baseline dynamics and
respiratory-support-associated deviations, and explicitly represents the
predicted change during latent-state updating. Across internal, temporal, and external cohorts, the model achieved overall performance comparable to strong patient dynamics baselines, with more consistent improvements when physiological states were changing.

The current study is retrospective and focuses on observational trajectory
prediction rather than causal treatment-effect estimation or treatment
recommendation. Future work will investigate whether the learned dynamics can support counterfactual trajectory analysis and respiratory-support decision modeling, with explicit treatment of confounding and prospective validation.

\section*{AI Use Disclosure}

Generative AI tools were used to assist with manuscript drafting and editing. All AI-assisted text is independently reviewed and verified by the authors. The authors take full
responsibility for the final content of the paper.

\section*{Ethics Statement}

This study uses retrospective, de-identified electronic health record data from multiple clinical institutions under applicable institutional review, data-use, and data-governance requirements. Patient consent was waived where permitted for retrospective analysis of de-identified data. All analyses were conducted in
accordance with institutional privacy and security policies, and no attempt was made to re-identify individual patients. The underlying clinical data cannot be publicly released where restricted by institutional policies or data-use agreements.

The proposed model is intended solely as a research framework for modeling short-term physiological trajectories under respiratory support and is not designed for autonomous clinical decision-making. Model predictions should not be interpreted as guaranteed patient responses or as recommendations for respiratory-support selection. Any prospective clinical use would require additional external and prospective validation, assessment of failure modes, clinical oversight, and appropriate regulatory review.

Clinical practice patterns, patient populations, measurement processes, and respiratory-support use may differ across institutions. Although we evaluate the model across multiple cohorts to assess robustness under distribution shift, residual bias and under-representation may remain, and performance may not
generalize to all patient groups or care settings.

Finally, the decomposition of predicted physiological change into underlying progression and respiratory-support-associated components is a modeling construct and should not be interpreted as identifying a causal treatment effect. Respiratory interventions are observed rather than randomized and may be influenced by measured and unmeasured clinical factors. Counterfactual claims regarding the effectiveness of alternative respiratory-support
strategies are therefore outside the scope of this work.

\section*{Reproducibility Statement}

We provide detailed descriptions of cohort construction, trajectory generation, preprocessing, respiratory-support representation, model architecture, training, and evaluation in the main text and appendix. Each modeling example consists of a fixed observed history followed by a fixed prediction horizon, and all reported
trajectory results are obtained using free-running autoregressive prediction without access to future ground-truth physiology.

The architecture specification includes latent-state initialization,
state-dependent respiratory-support encoding, decomposition of physiological change into underlying and treatment-associated components, feature-wise treatment gating, recurrent latent-state updating, and all training objectives and regularization terms. We also document the optimization settings, hyperparameters, horizon weighting scheme, model-selection procedure, and random seeds used in the reported experiments.

We evaluate the proposed method using predefined trajectory-level and
horizon-specific error metrics across internal and external cohorts, together with baseline comparisons, ablation studies, and sensitivity analyses. Subject to institutional and data-use restrictions, we will release the implementation
of preprocessing, trajectory construction, model training, free-running evaluation, baseline models, and scripts used to generate the reported tables and figures, together with configuration files sufficient to reproduce the experiments on appropriately formatted data. The underlying clinical datasets cannot be publicly released where restricted by institutional policies or data-use agreements.

\bibliographystyle{unsrtnat}
\bibliography{reference}  

\appendix

\section{Model architecture overview}
\label{app:framework}

Figure~\ref{fig:framework_overview} summarizes the response-aware patient dynamics framework and the information flow across the autoregressive prediction horizon.

\begin{figure*}[t]
    \centering
    \includegraphics[width=1\textwidth]{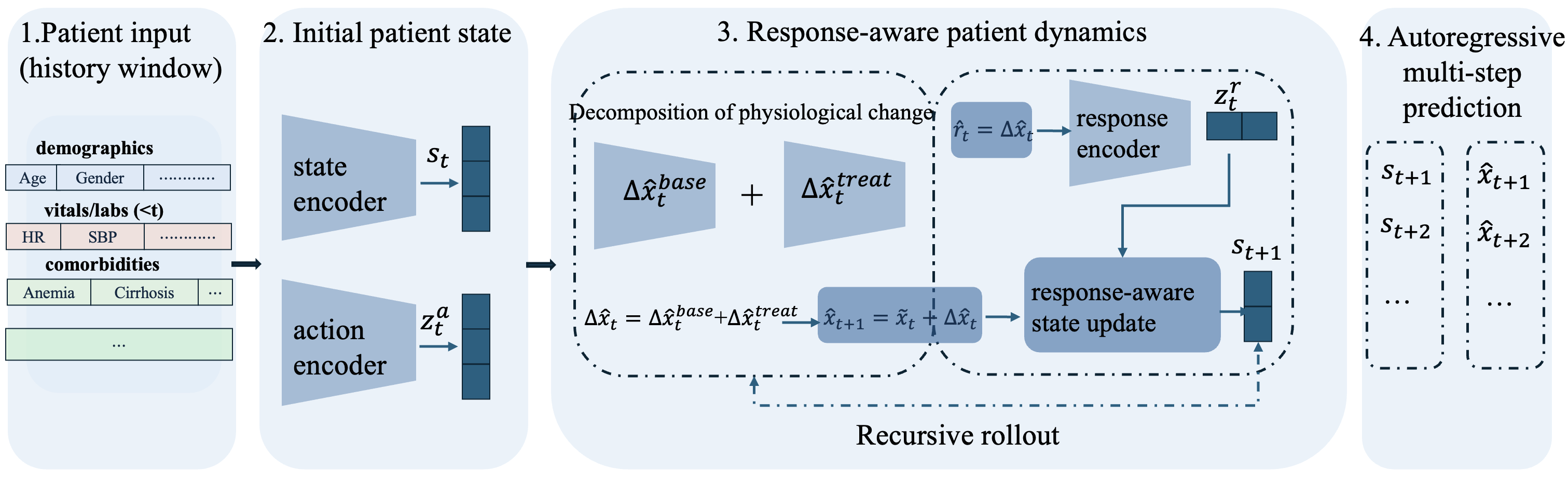}
    \caption{
   Overview of the response-aware patient dynamics model. The model initializes a latent patient state from recent clinical history and predicts physiological change as the sum of state-dependent baseline dynamics and a respiratory-support-associated deviation. The predicted change produces the next physiological state through a residual update and is also explicitly encoded as the physiological response for latent-state updating. The resulting state is recursively propagated over the prediction horizon.
    }
    \label{fig:framework_overview}
\end{figure*}

\section{Data preprocessing and trajectory construction}

\label{data}

Table~\ref{tab:cohort_characteristics} summarizes cohort demographics,
prediction-window counts, target availability, and respiratory-support
distributions across the development, internal validation, temporal, and
external cohorts.

\begin{table*}[t]
\centering
\caption{
Characteristics of the study cohorts.
}
\label{tab:cohort_characteristics}

\small
\setlength{\tabcolsep}{5pt}
\renewcommand{\arraystretch}{1.08}

\begin{tabular}{lcccc}
\hline
\textbf{Characteristic}
& \textbf{Dev}
& \textbf{Val}
& \textbf{Temporal}
& \textbf{External} \\
\hline

Patients, $N$
& 6,762 & 1,912 & 564 & 791 \\

Age, years, mean $\pm$ SD
& 62.9 $\pm$ 16.4
& 62.1 $\pm$ 16.6
& 64.3 $\pm$ 16.5
& 64.3 $\pm$ 17.2 \\

Female, $N$ (\%)
& 3,171 (42.5\%)
& 820 (41.7\%)
& 257 (44.4\%)
& 360 (44.4\%) \\

\multicolumn{5}{l}{\textit{Race, $N$ (\%)}} \\

\quad White
& 3,884 (52.1\%)
& 963 (49.0\%)
& 274 (47.3\%)
& 393 (48.5\%) \\

\quad Black or African American
& 609 (8.2\%)
& 146 (7.4\%)
& 49 (8.5\%)
& 19 (2.3\%) \\

\quad Asian
& 563 (7.5\%)
& 152 (7.7\%)
& 50 (8.6\%)
& 182 (22.5\%) \\

\quad Other/Unknown
& 2,402 (32.2\%)
& 704 (35.8\%)
& 206 (35.6\%)
& 216 (26.7\%) \\

Prediction windows, $N$
& 7,972 & 1,993 & 642 & 863 \\

Available target observations, $N$
& 1,046,277
& 264,366
& 85,590
& 116,805 \\

Meaningful-change targets, $N$ (\%)
& 27,981 (2.7\%)
& 7,316 (2.8\%)
& 2,073 (2.4\%)
& 2,173 (1.9\%) \\

\hline
\multicolumn{5}{l}{
\textit{Respiratory support at prediction start, $N$ (\%)}
} \\

Room air
& 1,955 (24.5\%)
& 510 (25.6\%)
& 158 (24.6\%)
& 262 (30.4\%) \\

Conventional O$_2$
& 4,362 (54.7\%)
& 1,046 (52.5\%)
& 348 (54.2\%)
& 466 (54.0\%) \\

HFNC
& 1,188 (14.9\%)
& 312 (15.7\%)
& 107 (16.7\%)
& 104 (12.1\%) \\

NIV
& 467 (5.9\%)
& 125 (6.3\%)
& 29 (4.5\%)
& 31 (3.6\%) \\

\hline

\end{tabular}
\end{table*}

\paragraph{Trajectory representation.}
Each patient encounter was discretized into hourly bins and represented as a longitudinal trajectory containing physiological measurements,
respiratory-support information, time-since-last-measurement variables, and static patient characteristics. Trajectories extended from ICU admission until transfer out of the ICU or initiation of invasive mechanical ventilation (IMV), whichever occurred first. Respiratory-support actions before IMV included room air, conventional oxygen therapy, high-flow nasal cannula (HFNC), and non-invasive ventilation (NIV).

At each hour $t$, the physiological state $\bm{x}_t \in \mathbb{R}^{50}$ contained 50 dynamic clinical variables. Each variable was paired with a corresponding time-since-last-measurement feature, forming a 50-dimensional TSLM vector. Static patient information consisted of
36 demographic and clinical variables and 62 comorbidity indicators, which
were fixed for each trajectory and remained unchanged during autoregressive prediction.

\paragraph{Physiological feature preprocessing.}
Missing values in the 50 physiological variables were handled using
feature-wise median imputation. To avoid information leakage, all imputation
parameters were estimated using the training cohort only. For 
feature $j$, let $m_j^{\mathrm{tr}}$ denote the median of the observed training
values. A missing observation was replaced by

\begin{equation}
\tilde{x}_{t,j}
=
\begin{cases}
x_{t,j},
&
x_{t,j}\ \text{observed},
\\[4pt]
m_j^{\mathrm{tr}},
&
x_{t,j}\ \text{missing}.
\end{cases}
\end{equation}

After imputation, each physiological variable was standardized using the mean
and standard deviation estimated from the imputed training data. Let
$\mu_j^{\mathrm{tr}}$ and $\sigma_j^{\mathrm{tr}}$ denote these statistics.
The normalized physiological variable was computed as

\begin{equation}
x_{t,j}^{\mathrm{norm}}
=
\frac{
\tilde{x}_{t,j}
-
\mu_j^{\mathrm{tr}}
}{
\sigma_j^{\mathrm{tr}}
}.
\end{equation}

The same training-derived median, mean, and standard deviation were applied without re-estimation to the validation and external cohorts. We additionally retained a binary missingness indicator for each physiological variable, denoting whether the corresponding value had originally been observed or imputed. The time-since-last-measurement variables were retained separately from the normalized physiological measurements and were provided to the initial state encoder.

\paragraph{Respiratory intervention representation.}
The respiratory support administered at each time point was represented by a
treatment vector $\bm{a}_t$ containing the respiratory support modality,
device subtype, available device settings, and indicators describing whether
individual settings were observed. The support modality and device subtype
were categorical variables and were encoded using one-hot vocabularies
constructed from the training cohort.

Continuous respiratory support settings consisted of
$\mathrm{FiO}_2$, oxygen flow, IPAP, and EPAP. These settings are not
applicable to every respiratory support modality. For example, flow may be
available for high-flow oxygen support, whereas IPAP and EPAP are primarily
associated with non-invasive positive-pressure ventilation. We therefore
paired each continuous setting with a binary availability mask. For a
treatment setting $q_t$ with corresponding mask $r_t$, only training
observations satisfying $r_t=1$ were used to estimate its normalization
statistics. When the setting was available, it was normalized as

\begin{equation}
q_t^{\mathrm{norm}}
=
\frac{
q_t-\mu_q^{\mathrm{tr}}
}{
\sigma_q^{\mathrm{tr}}
},
\qquad r_t=1,
\end{equation}

where $\mu_q^{\mathrm{tr}}$ and $\sigma_q^{\mathrm{tr}}$ were estimated from valid training observations. When $r_t=0$, the normalized setting was assigned a value of zero while the availability mask remained zero, which allows the model to distinguish an unavailable treatment setting from an observed value
whose standardized value happens to be close to zero.

The resulting treatment representation can be written as

\begin{equation}
\bm{a}_t
=
\left[
\bm{a}_t^{\mathrm{mod}},
\bm{a}_t^{\mathrm{sub}},
\tilde{f}_t^{\mathrm{FiO}_2},
\tilde{f}_t^{\mathrm{flow}},
\tilde{f}_t^{\mathrm{IPAP}},
\tilde{f}_t^{\mathrm{EPAP}},
r_t^{\mathrm{FiO}_2},
r_t^{\mathrm{flow}},
r_t^{\mathrm{IPAP}},
r_t^{\mathrm{EPAP}}
\right],
\end{equation}

where $\bm{a}_t^{\mathrm{mod}}$ and
$\bm{a}_t^{\mathrm{sub}}$ denote the one-hot modality and subtype
representations, respectively. Encounters containing an unknown respiratory support modality were excluded from model development and evaluation.

\paragraph{Trajectory construction.}
We constructed fixed-length trajectories consisting of a 6-hour observed history followed by a 6-hour prediction horizon. The 6-hour horizon was chosen to focus on short-term physiological changes during respiratory support, which
can provide clinically relevant information about evolving respiratory status and deterioration \citep{roca2019index}.

For an encounter with a selected starting index $k$, the observed physiological history was defined as

\begin{equation}
\bm{X}_{k:k+5}
=
\left\{
\bm{x}_{k},
\bm{x}_{k+1},
\ldots,
\bm{x}_{k+5}
\right\},
\end{equation}

with corresponding time-since-last-measurement sequence

\begin{equation}
\bm{L}_{k:k+5}
=
\left\{
\bm{\ell}_{k},
\bm{\ell}_{k+1},
\ldots,
\bm{\ell}_{k+5}
\right\}.
\end{equation}

The subsequent 6-hour physiological trajectory was

\begin{equation}
\bm{X}_{k+6:k+11}
=
\left\{
\bm{x}_{k+6},
\bm{x}_{k+7},
\ldots,
\bm{x}_{k+11}
\right\},
\end{equation}

and the respiratory support sequence over the prediction horizon was

\begin{equation}
\bm{A}_{k+6:k+11}
=
\left\{
\bm{a}_{k+6},
\bm{a}_{k+7},
\ldots,
\bm{a}_{k+11}
\right\}.
\end{equation}

Encounters containing fewer than 12 hourly observations were excluded because they could not provide both the complete history and prediction horizon. Encounters containing exactly 12 observations were used in full. For encounters with more than 12 observations, one valid contiguous 12-hour window was randomly selected from the available trajectory. Window selection was performed once during dataset construction, after which the resulting training, validation, and external cohorts were fixed and reused across all experiments.

Each resulting sample therefore consists of

\begin{equation}
\left(
\bm{X}_{k:k+5},
\bm{L}_{k:k+5},
\bm{A}_{k:k+5},
\bm{z}^{\mathrm{static}},
\bm{X}_{k+6:k+11},
\bm{A}_{k+6:k+11}
\right),
\end{equation}

where $\bm{z}^{\mathrm{static}}$ contains the static demographic, clinical, and
comorbidity variables. The observed history and static information are used to
construct the initial latent patient state, whereas the respiratory
interventions over the prediction horizon condition the subsequent
autoregressive physiological rollout.

\paragraph{External cohort preprocessing.}
External cohorts were transformed using exactly the same preprocessing parameters estimated from the internal training cohort. In particular, missing physiological observations were imputed using the internal training medians, physiological variables were normalized using the internal training means and standard deviations after imputation, and respiratory support variables were
encoded using the treatment vocabulary and continuous-setting normalization parameters learned from the internal training cohort. No preprocessing parameters were re-estimated on external data, which ensured that external evaluation reflected deployment of a fixed model and fixed preprocessing pipeline to previously unseen patient populations.

\section{Baselines}
\label{app:baselines}

We compare against a set of representative physiological trajectory and clinical world-models. GRU-based recurrent models are commonly used for longitudinal EHR dynamics, and recent clinical world models such as SepsisAgent \citep{wu2026agentifying} and medDreamer \citep{xu2026meddreamer} also adopt GRU-based transition architectures. We therefore include both a standard action-conditioned GRU baseline and adaptations of these recent clinical world models to the respiratory trajectory prediction setting.

To ensure a fair comparison, all learning-based methods use the same training and validation splits, preprocessing pipeline, physiological input variables, treatment representations, prediction horizons, and observation masks. Hyperparameters are tuned independently for each method using Optuna \citep{akiba2019optuna} with the same number of trials and the same validation objective. We search over learning rate, hidden dimension, number of layers, dropout, and weight decay, and select the configuration achieving the lowest validation RMSE. The same early-stopping criterion is applied across methods. For literature-derived world models, we preserve their core transition architectures and modify only the task-specific input and output interfaces required for respiratory trajectory prediction.

\begin{itemize}

\item \textbf{Persistence}: A non-parametric baseline that assumes the patient's physiological state remains unchanged over the prediction horizon. Specifically, the most recent observed state is carried forward as the prediction for all future time points.

\item \textbf{Action-GRU}: A standard action-conditioned recurrent dynamics baseline that encodes the observed physiological history with a GRU \citep{cho-etal-2014-learning} and autoregressively predicts future states conditioned on respiratory support interventions.

\item \textbf{SepsisAgent-WM \citep{wu2026agentifying}}: Following the Clinical World Model used in SepsisAgent, we use a GRU-based world model that encodes the full patient trajectory, including both physiological states and prior treatment actions, and predicts the next physiological state conditioned on the current respiratory support intervention.

\item \textbf{medDreamer-WM \citep{xu2026meddreamer}}: Following the latent world model in medDreamer, we model stochastic patient dynamics in a latent state space using GRU-based posterior and prior transition models. The posterior infers the latent state from observed physiology and treatment history, while the prior predicts future latent states conditioned on the previous latent state and respiratory support action. Future physiological states are reconstructed from the predicted latent trajectory.

\end{itemize}
\section{Additional Analysis of Response-Aware Dynamics}
\label{app:theory}

This section clarifies two design choices in the proposed model. We first
describe the role of the explicit response pathway in latent-state updating.
We then analyze the reference constraint used to separate baseline dynamics
from respiratory-support-associated deviations. The analysis concerns the
model parameterization and observational transition dynamics and does not
establish causal treatment effects.

\subsection{Explicit response representation}
\label{app:response_theory}

Let $X_t$ denote the physiological state at time $t$, and define the
short-term physiological response as
\begin{equation}
R_t = X_{t+1}-X_t.
\label{eq:app_response}
\end{equation}

In the proposed model, the response pathway is used to represent physiological
change explicitly during latent-state updating. Under free-running prediction,
the next physiological state is generated by
\begin{equation}
\hat{X}_{t+1}
=
\tilde{X}_t+\Delta\hat{X}_t,
\end{equation}
and therefore
\begin{equation}
\hat{R}_t
=
\hat{X}_{t+1}-\tilde{X}_t
=
\Delta\hat{X}_t.
\label{eq:app_pred_response}
\end{equation}

Thus, the model-generated response does not introduce an additional observation
during inference. Instead, the predicted change is passed through a separate
response encoder,
\begin{equation}
Z_t^R
=
E_{\mathrm{response}}(\hat{R}_t),
\end{equation}
and supplied to the state-update function together with the predicted
physiological level:
\begin{equation}
S_{t+1}
=
U_{\mathrm{state}}
\left(
S_t,
\hat{X}_{t+1},
Z_t^A,
Z_t^R
\right).
\end{equation}

The response pathway therefore provides an explicit representation of
physiological change rather than an additional source of information at
inference time. This parameterization allows the state-update module to process
the predicted physiological level and its change through separate learned
representations.

During the initial response-guided training period, the response encoder may
instead receive the observed change
\begin{equation}
R_t^{\mathrm{obs}}
=
X_{t+1}-X_t.
\end{equation}
This supervision is used only for the response input to the latent-state
update; the physiological trajectory itself remains autoregressive. The
probability of using the observed response is reduced to zero after the initial
training period, after which both training and evaluation use model-generated
responses.

\subsection{Reference-anchored decomposition of physiological dynamics}
\label{app:decomposition_theory}

We consider the decomposition of the conditional physiological transition
into baseline dynamics and a respiratory-support-associated deviation.

Let
\begin{equation}
F(s,a)
=
\mathbb{E}
\left[
\Delta X_t
\mid
S_t=s,A_t=a
\right]
\label{eq:conditional_transition}
\end{equation}
denote the conditional mean physiological change for latent state $s$ and
respiratory-support action $a$.

For the analysis, we write the model decomposition as
\begin{equation}
F(s,a)
=
f_{\mathrm{base}}(s)
+
d(s,a),
\label{eq:additive_decomposition}
\end{equation}
where $f_{\mathrm{base}}(s)$ denotes the shared state-dependent dynamics and
$d(s,a)$ denotes the effective respiratory-support-associated deviation. In
the implemented model, this deviation corresponds to
\begin{equation}
d(s,a)
=
g(s,a)
\odot
F_{\mathrm{treat}}
\left(
E_{\mathrm{action}}(s,a)
\right).
\label{eq:effective_treatment_component}
\end{equation}

Without a reference constraint, the decomposition in
Eq.~\ref{eq:additive_decomposition} is not unique. For any function $c(s)$,
\begin{align}
f_{\mathrm{base}}'(s)
&=
f_{\mathrm{base}}(s)+c(s), \\
d'(s,a)
&=
d(s,a)-c(s),
\end{align}
and therefore
\begin{equation}
f_{\mathrm{base}}'(s)+d'(s,a)
=
F(s,a).
\end{equation}

We use room air as the reference respiratory-support condition. Let
\begin{equation}
a_0=\text{room air},
\end{equation}
and consider the reference constraint
\begin{equation}
d(s,a_0)=0.
\label{eq:reference_constraint}
\end{equation}

\begin{proposition}[Reference-anchored decomposition]
\label{prop:anchored_decomposition}
Let $F(s,a)$ be a fixed conditional mean transition and let $a_0$ be a
reference action. Among additive decompositions satisfying
\begin{equation}
F(s,a)
=
f_{\mathrm{base}}(s)+d(s,a)
\end{equation}
and
\begin{equation}
d(s,a_0)=0,
\end{equation}
the decomposition is uniquely given by
\begin{align}
f_{\mathrm{base}}(s)
&=
F(s,a_0),
\label{eq:anchored_base}
\\
d(s,a)
&=
F(s,a)-F(s,a_0).
\label{eq:anchored_treatment}
\end{align}
\end{proposition}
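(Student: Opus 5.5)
The argument splits into existence and uniqueness, both obtained by evaluating the additive identity at the reference action $a_0$. For existence, define $f_{\mathrm{base}}(s)=F(s,a_0)$ and $d(s,a)=F(s,a)-F(s,a_0)$ as in Eqs.~\ref{eq:anchored_base}--\ref{eq:anchored_treatment}. Their sum is $F(s,a)$ identically, and $d(s,a_0)=F(s,a_0)-F(s,a_0)=0$. So this pair is an admissible decomposition, and the set of decompositions in the proposition is nonempty.

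For uniqueness, take any pair $(f_{\mathrm{base}},d)$ satisfying $F(s,a)=f_{\mathrm{base}}(s)+d(s,a)$ for all $(s,a)$, together with $d(s,a_0)=0$. Setting $a=a_0$ in the additive identity gives $F(s,a_0)=f_{\mathrm{base}}(s)+d(s,a_0)=f_{\mathrm{base}}(s)$ for every $s$, which is Eq.~\ref{eq:anchored_base}. Substituting this back into the identity and solving for $d$ gives $d(s,a)=F(s,a)-F(s,a_0)$, which is Eq.~\ref{eq:anchored_treatment}. Any admissible decomposition therefore coincides pointwise with the one constructed above.

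An equivalent way to phrase uniqueness uses the gauge freedom noted just before the proposition. Any two additive decompositions of $F$ differ by a transformation $f_{\mathrm{base}}\mapsto f_{\mathrm{base}}+c(s)$, $d\mapsto d-c(s)$, where $c(s)=f'_{\mathrm{base}}(s)-f_{\mathrm{base}}(s)$; since $d'-d$ equals $-c$, it depends on $s$ only. If both decompositions satisfy the reference constraint, then evaluating at $a_0$ gives $c(s)=0$. The reference constraint thus removes exactly the non-identifiability exhibited earlier.

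There is no real obstacle here; the argument is two lines of algebra. The only points to state carefully are the scope of the claims. The identities hold pointwise for every $s$ in the domain of $F$, which requires $a_0$ to lie in the action domain at every $s$; equivalently, $F(s,a_0)$ must be defined wherever $F(s,\cdot)$ is. The result is also purely algebraic: it says nothing about whether the learned $d$ has a causal interpretation, consistent with the paper's caveat. Uniqueness should be stated as holding among all functions $f_{\mathrm{base}}$ and $d$. Under the gated neural parameterization in Eq.~\ref{eq:effective_treatment_component}, the anchored solution need not be exactly representable, so the claim is about the decomposition as a pair of functions and not about network parameters.
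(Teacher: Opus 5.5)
Your proof is correct and follows the paper's route: evaluate the additive identity at $a_0$ to force $f_{\mathrm{base}}(s)=F(s,a_0)$, then substitute back to obtain $d(s,a)=F(s,a)-F(s,a_0)$. Your explicit existence check, the gauge-freedom rephrasing, and the scope remarks add correct detail that the paper's proof leaves implicit.
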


\begin{proof}
Evaluating the additive decomposition at $a_0$ gives
\begin{equation}
F(s,a_0)
=
f_{\mathrm{base}}(s)+d(s,a_0).
\end{equation}
Using $d(s,a_0)=0$ yields
\begin{equation}
f_{\mathrm{base}}(s)=F(s,a_0).
\end{equation}
Substituting this expression back into
Eq.~\ref{eq:additive_decomposition} gives
\begin{equation}
d(s,a)
=
F(s,a)-F(s,a_0),
\end{equation}
which uniquely determines both components.
\end{proof}

Proposition~\ref{prop:anchored_decomposition} gives the decomposition a
reference-relative interpretation. Under the hard reference constraint, the
baseline component equals the conditional transition associated with room air,
and the support-associated component represents the deviation from that
reference transition.

The implemented model uses a soft rather than hard reference constraint.
Specifically, during room-air prediction steps we penalize
\begin{equation}
\mathcal{L}_{\mathrm{RA}}
=
\mathbb{E}
\left[
\left\|
d(S_t,A_t)
\right\|_2^2
\middle|
A_t=a_0
\right],
\end{equation}
which corresponds to
\begin{equation}
\mathcal{L}_{\mathrm{RA}}
=
\mathbb{E}
\left[
\left\|
\bm{g}_t
\odot
F_{\mathrm{treat}}(\bm{z}^{a}_t)
\right\|_2^2
\middle|
A_t=a_0
\right].
\end{equation}

The learned model is therefore encouraged toward the reference-anchored
decomposition but is not constrained to satisfy
$d(s,a_0)=0$ exactly. In particular, the proposition establishes uniqueness
only under the corresponding hard reference constraint.

Finally, the decomposition is defined in terms of observational conditional
transitions. The support-associated deviation should therefore not be
interpreted as an identified causal effect of respiratory support. Respiratory
support is not randomized in the retrospective data, and treatment assignment
may depend on measured and unmeasured patient characteristics.

\section{Additional Stratified Results}
\label{app:support_stratification}

Table~\ref{tab:support_stratification} reports the corresponding absolute RMSE
values for each respiratory-support subgroup.
\begin{table*}
\centering
\caption{
Trajectory prediction performance stratified by respiratory support modality.
RMSE is computed over all observed targets across the 1--6 h prediction horizon
within each subgroup; lower is better.
$N$ denotes the number of evaluation windows.
}
\label{tab:support_stratification}

\small
\setlength{\tabcolsep}{2.5pt}
\renewcommand{\arraystretch}{1.2}

\begin{tabular}{llcccccc}
\hline
\textbf{Cohort}
& \textbf{Support}
& \textbf{$N$}
& \textbf{Persistence}
& \textbf{Action-GRU}
& \textbf{SepsisAgent-WM}
& \textbf{MedDreamer-WM}
& \textbf{Ours} \\
\hline

\multirow{4}{*}{Val}
& Room air
& 491
& 0.546
& 0.517
& 0.517
& 0.510
& 0.515\\

& Conventional O$_2$
& 1,056
& 0.540
& 0.509
& 0.510
& 0.502
& 0.500 \\

& HFNC
& 314
& 0.527
& 0.504
& 0.502
& 0.510
& 0.512\\

& NIV
& 132
& 0.665
& 0.637
& 0.633
& 0.634
& 0.630 \\

\cline{1-8}

\multirow{4}{*}{Temporal}
& Room air
& 161
& 0.528
& 0.463
& 0.469 
& 0.459
& 0.456\\

& Conventional O$_2$
& 346
& 0.510
& 0.479
& 0.477
& 0.474
& 0.474 \\

& HFNC
& 105
& 0.819
& 0.800
& 0.800
& 0.794
& 0.790 \\

& NIV
& 30
& 0.639
& 0.607
& 0.602
& 0.593
& 0.604 \\

\cline{1-8}

\multirow{4}{*}{External}
& Room air
& 265
& 0.755
& 0.730
& 0.727
& 0.738
& 0.727 \\

& Conventional O$_2$
& 465
& 0.895
& 0.871
& 0.873
& 0.883
& 0.863 \\

& HFNC
& 100
& 1.000
& 0.983
& 0.977
& 0.984
& 0.985 \\

& NIV
& 33
& 1.437
& 1.433
& 1.427
& 1.452
& 1.425 \\

\hline
\end{tabular}
\end{table*}

\begin{table*}[t]
\centering
\caption{
Feature-level RMSE on physiologically meaningful changes
($|x_{t+h}-x_t|>0.25$ SD) across internal and external cohorts.
$N$ denotes the number of observed meaningful-change targets.
Lower RMSE indicates better performance.
}
\label{tab:feature_meaningful}

\small
\setlength{\tabcolsep}{4.2pt}
\renewcommand{\arraystretch}{1.08}

\begin{tabular}{lrrrrrr}
\hline
\textbf{Feature}
& \textbf{$N$}
& \textbf{Persistence}
& \textbf{Action-GRU}
& \textbf{SepsisAgent-WM}
& \textbf{MedDreamer-WM}
& \textbf{Ours} \\
\hline

\multicolumn{7}{l}{\textit{Val}} \\

O$_2$Sat
& 8,275
& 1.057
& 0.914
& 0.905
& 0.872
& 0.850\\

Resp
& 8,524
& 1.029
& 0.898
& 0.902
& 0.819
& 0.893\\

FiO$_2$
& 151
& 2.474
& 2.426
& 2.462
& 2.277
& 1.878 \\

PaO$_2$
& 272
& 1.440
& 1.422
& 1.416
& 1.460
& 1.396 \\

PaCO$_2$
& 278
& 1.652
& 1.592
& 1.617
& 1.664
& 1.518 \\

pH
& 377
& 1.614
& 1.498
& 1.569
& 1.606
& 1.563\\

HR
& 6,107
& 0.737
& 0.687
& 0.690
& 0.715
& 0.691 \\

MAP
& 7,161
& 1.012
& 0.903
& 0.896
& 0.858
& 0.857\\

SBP
& 6,966
& 0.911
& 0.825
& 0.825
& 0.808
& 0.816\\

DBP
& 7,345
& 1.075
& 0.944
& 0.939
& 0.899
& 0.894 \\

\cline{1-7}

\multicolumn{7}{l}{\textit{Temporal}} \\

O$_2$Sat
& 2,777
& 1.053 & 0.909 & 0.905&0.897  &0.802\\

Resp
& 2,727
& 1.023 & 0.882 & 0.883 & 0.782 & 0.774 \\

FiO$_2$
& 48
& 2.287 & 2.115 & 2.281 & 2.275 & 2.104 \\

PaO$_2$
& 120
& 1.474 & 1.377 & 1.377 & 1.302 & 1.347 \\

PaCO$_2$
& 104
& 1.134 & 1.044 & 1.100 & 1.126 & 1.053 \\

pH
& 145
& 2.049 & 1.986 & 2.047 & 1.938 & 2.030 \\

HR
& 1,969
& 0.711 & 0.676 & 0.670 &0.682 & 0.670 \\

MAP
& 2,211
& 0.996 & 0.885 & 0.882 & 0.838 & 0.837 \\

SBP
& 2,086
& 0.904 & 0.814 & 0.813 & 0.798 & 0.803 \\

DBP
& 2,206
& 1.058 & 0.925 & 0.923 & 0.871 & 0.828 \\

\cline{1-7}

\multicolumn{7}{l}{\textit{External}} \\

O$_2$Sat
& 3,649
& 1.194 & 1.034 & 1.032 & 1.011 & 1.010 \\

Resp
& 3,766
& 1.312 & 1.144 & 1.144& 1.052 & 1.139 \\

FiO$_2$
& 85
& 2.365 & 2.341 & 2.338 & 2.351 & 1.886 \\

PaO$_2$
& 58
& 1.729 & 1.611 & 1.642 & 1.609 & 1.566 \\

PaCO$_2$
& 67
& 1.380 & 1.475 & 1.381 & 1.403 & 1.429 \\

pH
& 82
& 2.145 & 2.164 & 2.101 & 2.178 & 2.047 \\

HR
& 2,607
& 0.774 & 0.722 & 0.725 & 0.747 & 0.709 \\

MAP
& 3,498
& 0.955 & 0.870 & 0.869 & 0.831 & 0.821 \\

SBP
& 3,415
& 1.011 & 0.934 & 0.932 & 0.918 & 0.910 \\

DBP
& 3,135
& 0.877 & 0.775 & 0.782 & 0.745 & 0.729 \\

\hline
\end{tabular}
\end{table*}

\end{document}